\documentclass{article}

\usepackage{arxiv}

\usepackage[utf8]{inputenc} 
\usepackage[T1]{fontenc}    
\usepackage{hyperref}       
\usepackage{url}            
\usepackage{booktabs}       
\usepackage{amsfonts}       
\usepackage{nicefrac}       
\usepackage{microtype}      
\usepackage{lipsum}		
\usepackage{refcount} 
\usepackage{graphicx}
\usepackage{natbib}
\usepackage{doi}
\usepackage{comment}
\usepackage{times}
\usepackage{soul}
\usepackage{url}
\usepackage[small]{caption}
\usepackage{graphicx}
\usepackage{amsmath}
\usepackage{amsthm}
\usepackage{booktabs}
\usepackage{algorithm}
\usepackage{algorithmic}
\usepackage[switch]{lineno}
\usepackage{amssymb}

\newtheorem{theorem}{Theorem}
\newtheorem{definition}{Definition}
\newtheorem{proposition}{Proposition}
\newtheorem{lemma}{Lemma}

\newcommand{\Pm}{\pm} 

\title{
Shapley Regression
for Rare Disease Diagnosis Support: a case study on APDS
}

\author{ \href{https://orcid.org/0000-0002-4132-1068}{\includegraphics[scale=0.06]{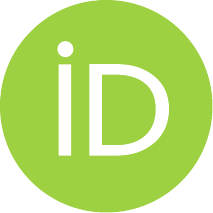}\hspace{1mm}Safa Alsaidi} \\
	Inria, Inserm, UPC,\\HeKA U1346, Paris, France. \\
	\texttt{safa.alsaidi@inria.fr} 
	\And
	\href{https://orcid.org/0009-0001-9364-8572}{\includegraphics[scale=0.06]{orcid.pdf}\hspace{1mm}Tomás Brogueira} \\
	Técnico, University of Lisbon, \\ INESC-ID, Lisbon, Portugal\\
\texttt{tomas.brogueira@tecnico.ulisboa.pt} \\
	\And 
	\href{https://orcid.org/0000-0002-0030-8094}{\includegraphics[scale=0.06]{orcid.pdf}\hspace{1mm}Nizar Mahlaoui} \\
	 Necker Enfants Malades University Hospital,\\ AP-HP, Paris, France\\
	\texttt{nizar.mahlaoui@aphp.fr} \\
	 \And 
	\href{https://orcid.org/0000-0001-6921-8161}{\includegraphics[scale=0.06]{orcid.pdf}\hspace{1mm}Marc Vincent}\\
	  Data Science Platform, INSERM UMR1163, Imagine Institute,\\ UPC, Paris, France\\
\texttt{marc.vincent@institutimagine.org,} \\
	  \And	 
	\href{https://orcid.org/0000-0001-7301-6167}{\includegraphics[scale=0.06]{orcid.pdf}\hspace{1mm}Guilherme Pelegrina}\\
    Mackenzie Presbyterian University,\\ São Paulo, Brazil\\   \texttt{guilherme.pelegrina@mackenzie.br} \\
      \And 
	\href{https://orcid.org/0000-0002-3326-2811}{\includegraphics[scale=0.06]{orcid.pdf}\hspace{1mm}Nicolas Garcelon }\\
    Data Science Platform, INSERM UMR1163, Imagine Institute\\ UPC, Paris, France\\
\texttt{nicolas.garcelon@institutimagine.org} \\
	 \And	 
	\href{https://orcid.org/0000-0002-1466-062X}{\includegraphics[scale=0.06]{orcid.pdf}\hspace{1mm}Adrien Coulet} \\
	 Inria, Inserm, UPC,\\
     HeKA U1346, Paris, France. \\
	 \texttt{ adrien.coulet@inria.fr} \\
      \And 
	\href{https://orcid.org/0000-0003-2316-7623}{\includegraphics[scale=0.06]{orcid.pdf}\hspace{1mm}Miguel Couceiro}\\
	Técnico, University of Lisbon, \\ INESC-ID, Lisbon, Portugal\\
	 \texttt{miguel.couceiro@inesc-id.pt} \\
}

\begin{document}

\maketitle

\begin{abstract}

Activated PI3K$\delta$ Syndrome (APDS) is a rare genetic immune disorder caused by variants in PIK3CD or PIK3R1, with highly heterogeneous symptoms that often delay diagnosis. Early recognition is hampered by overlapping clinical presentations and limited clinician awareness, motivating systematic, data-driven approaches to detect APDS-associated phenotypic patterns in routine electronic health records. Traditional linear scoring systems cannot capture complex symptom interactions, while deep learning models, though expressive, often lack interpretability. To bridge this gap, we propose {\it Shapley regression}, a novel game-theoretic model replacing the linear predictor with a $k$-additive cooperative game, explicitly modeling co-occurrence of symptoms while maintaining the transparency and convexity of logistic regression. We carry out an empirical study of our lightweight method on eight public biomedical datasets, showing that a 2-additive model with $\ell_2$ regularization achieves an optimal trade-off between predictive power and noise robustness. We also apply it to a real-world cohort of 222 patients, on which Shapley regression accurately distinguished APDS cases from matched controls, confirming and validating phenotypes known to be associated with APDS, and facilitating the exploration of pairwise interactions between symptoms, validated by clinical experts.\footnote{
 The main contributions of this paper will be presented at IJCAI 2026. The first two named authors had equal contribution. }

\end{abstract}

\section{Introduction}

Rare primary immunodeficiencies pose a substantial diagnostic challenge due to their low prevalence, variable symptom presentation, and overlap with more common immune disorders \cite{faviez_diagnosis_2020}. Activated PI3K$\delta$ Syndrome (APDS) exemplifies these difficulties. Caused by pathogenic variants in the PIK3CD or PIK3R1 genes, APDS affects an estimated 1--2 individuals per million \cite{lougaris_activated_2024,thouenon_activated_2021}. However, its true global prevalence is unknown and likely underestimated due to underdiagnosis and incomplete registry reporting \cite{maccari_activated_2023,mahlaoui_prevalence_2017}. Moreover, APDS was only genetically characterized in 2013 \cite{angulo_phosphoinositide_2013}. Confirmed cases demonstrate that APDS patients can develop recurrent infections, lymphoproliferation, autoimmunity, bronchiectasis, and an increased risk of malignancy; however, the specific combination and severity of symptoms vary widely among individuals \cite{zhu_activated_2024,singh_updated_2020}. Because many clinical manifestations overlap with other primary immunodeficiencies and immune-mediated conditions, early recognition is difficult and often delayed \cite{lougaris_activated_2024,tavakol_diagnostic_2020}. Moreover, limited awareness among healthcare providers frequently results in prolonged morbidity and missed opportunities for targeted interventions before a definitive genetic explanation is available \cite{vanselow_activated_2023}. Given this, it is likely that a substantial number of APDS cases remain undiagnosed worldwide, further obscuring the true disease burden and delaying access to appropriate genetic testing and care.

From an ML perspective, diagnosing rare diseases like APDS presents a specific ``small data, high complexity'' dilemma. On the one hand, traditional linear models such as standard Logistic Regression offer stable learning on small datasets but are inherently limited by their purely additive structure (unless one includes interaction terms manually); they assume that each symptom contributes independently and cannot capture situations in which the joint presence of two symptoms conveys more information than the sum of their individual effects.
On the other hand, high-capacity non-linear models (like neural networks) can capture these interactions but are prone to severe overfitting on small cohorts and lack the transparency required for clinical adoption.

To address this challenge, we propose \emph{Shapley regression}, a 
game-theoretic extension of 
logistic regression in which the affine form is replaced by a cooperative game. We provide the implementation on our github repository.\footnote{\url{https://github.com/tomasbrogueira/shapley_regression}}

\begin{enumerate}
  \item \textbf{Model Non-Linearity:} Our proposed approach can account for crucial interaction effects (synergy and redundancy) between features.
  \item \textbf{Maintain Interpretability:} We parameterize the model using Shapley Interaction Indices. This means the model's weights are not abstract values, but direct game-theoretic measures of how much each feature (or feature pair) contributes to the diagnosis.
  \item \textbf{Ensure Stability:} We utilize a $k$-additive formulation, limiting interactions to a specific order (e.g., $k=2$ for pairwise interactions). This drastically reduces the parameter space, preventing the curse of dimensionality.
  \item \textbf{Theoretical guarantees and explanations by design:} Our 
  lightweight solution can account for the optimal trade-off between model performance and complexity. Moreover, its parameters correspond to importance criteria and their usefulness is attested on the medical domain, through an empirical study on the detection of APDS diganosis.
\end{enumerate}

We performed an extensive validation study to determine the most appropriate architecture for medical use. As detailed Supplementary Material ({Appendix~\ref{app:technical_validation}}), we benchmarked the method on synthetic data and eight public health datasets (including COVID-19 and Diabetes). These experiments confirmed that a 2-additive model with Ridge ($\ell_2$) regularization maximizes stability against noise while effectively capturing biological interactions, justifying its use in this study.

We then evaluated our validated framework on a real-world cohort of APDS patients. Our objective is twofold: firstly, to enable accurate detection of APDS manifestations; secondly, to exploit the model's game-theoretic structure to map the  landscape of symptom interactions, providing interpretable insights into the phenotypic patterns and symptom combinations that are prevalent across patients.

\section{Related Work}

Rare diseases collectively affect hundreds of millions of individuals worldwide, yet they remain vastly underdiagnosed, often taking years to be correctly identified. This diagnostic delay is largely attributable to their low prevalence, marked clinical heterogeneity, and substantial symptom overlap with more common conditions \cite{phillips_time_2024,alves_computer-assisted_2016}. In recent years, machine learning (ML) and artificial intelligence (AI) have been increasingly explored as potential tools to address these challenges \cite{topol_deep_2019,rajkomar_machine_2019}. A scoping review published in 2020 identified more than 200 studies applying ML methods to 74 different rare diseases, with a strong emphasis on diagnostic and prognostic tasks \cite{schaefer_use_2020}. However, the review also highlighted persistent methodological limitations, including small sample sizes, severe class imbalance, and a lack of external validation or systematic comparison with clinical expertise, underscoring the need for more robust and clinically grounded approaches.
Despite the increasing interest in computational diagnosis of rare diseases generally, there is limited published work specifically addressing ML based classification of primary immunodeficiency disorders such as APDS. Current diagnostic practice primarily relies on clinical evaluation combined with genetic testing, while only a limited number of proof-of-concept studies in related primary immunodeficiency disorders have explored the use of ML models and feature-importance analyses to support differential diagnosis \cite{mendez_barrera_whos_2023}.

A fundamental obstacle in applying ML to rare diseases such as APDS is the scarcity and imbalance of available data. Many rare conditions are represented by fewer than 100 documented cases in electronic health records (EHRs), making it difficult to train models that generalize beyond the training cohort \cite{mitani_small_2020}. To mitigate these challenges, prior work has proposed pipelines that integrate phenotype extraction, semantic similarity measures, and imbalance-aware learning strategies to amplify informative clinical signals while reducing noise from dominant control populations \cite{faviez_performance_2024,patrick_enhanced_2022}. Equally critical is the issue of explainability and clinical credibility. High-performing black-box models often fail to provide transparent reasoning for their predictions, which limits clinician trust and hinders clinical adoption \cite{raposo_fifty_2025}. As a result, there has been increasing interest in interpretable modeling approaches that produce human-readable explanations, align with domain knowledge, and can be evaluated in conjunction with expert clinical review \cite{ronicke_can_2019,rider_evaluating_2025}. Such explainable approaches are particularly valuable in diseases like APDS, where diagnostic patterns are subtle and require specialized immunological expertise to interpret.

Within this context, capacities and their associated Choquet integrals offer a principled framework for modeling complex feature interactions while maintaining interpretability. These aggregation functions have been extensively studied in multi-criteria decision making (MCDM) \cite{ModelingDecisionsAggregation,AggregationFunctionsBook}, and their connections to cooperative game theory—particularly through the Shapley value and interaction indices—provide powerful tools for quantifying feature importance and interdependencies in a real-world setting.

The integration of Choquet integrals as an aggregation function for a classification problem, in specific for logistic regression, was first proposed by Tehrani et al. \cite{OriginalChoquistic}, who introduced ``Choquistic regression''. Their approach yielded a monotone, interaction-aware model that outperformed standard logistic regression while maintaining interpretability. More recently, Pelegrina et al. \cite{pelegrina2025gametheoretic} proposed a game-theoretic logistic regression framework that eliminated the need for monotonicity constraints while still capturing complex feature interactions. Various parameterizations and computational efficiency improvements have since been explored to address the practical challenges of learning these models \cite{CapacityBasedClassification,ChoquetIntegralSHAPValues}.

\section{Proposed Method}
To tackle the ``small data, high complexity'' challenge inherent to rare disease diagnosis, we propose \textbf{Shapley Regression (SR)}. This framework extends the standard logistic regression (LR) model by incorporating non-linear interactions derived from Cooperative Game Theory.

LR assumes that features contribute additively to a diagnosis\footnote{I.e., the risk of Feature A + Feature B is Risk A + Risk B.}, however, biological systems are rarely additive. Instead, they often exhibit \textit{synergistic} effects (\textit{e.g.}, two features jointly provide much stronger evidence for a diagnosis than either feature alone) or \textit{redundant} effects (\textit{e.g.}, two features convey overlapping information, such that the presence of one reduces the additional diagnostic value of the other).
To capture this while maintaining the interpretability of a linear model, we use cooperative games that can be parametrized by ``{Shapley Interaction Indices}'' \cite{Grabisch2016SetFunctions}. In this section we recall the necessary background of our proposed model and the subsequent theoretical analysis, and provide further details in the Suplementary Material (Appendix~\ref{sec:math-foundations}).
\subsection{Shapley Regression Formulation}
We consider a binary classification task with a feature vector $\mathbf{x}=(x_1,\dots,x_n)$ normalized to $[0,1]^n$. We define the probability of the positive class (APDS) as:
\begin{equation*}
P(y=1\mid \mathbf{x}) = \sigma\left(\beta + \sum_{A \subseteq \mathcal{F}, A \neq \emptyset} I(A) \cdot \phi_A(\mathbf{x}) \right)
\end{equation*}
where $\mathcal{F}=\{1,\dots,n\}$ is the set of clinical features, $\sigma(\cdot)$ is the sigmoid function, and $\beta$ is the bias. Here, the learnable coefficients $I(A)$ are exactly the \textbf{Shapley Interaction Indices} associated with a subset of features $A$. The basis functions $\phi_A(\mathbf{x})$ are constructed to allow this direct game-theoretic interpretation. For any coalition of features $A$, the basis function is defined as \cite{ChoquetIntegralSHAPValues}:
\begin{equation}\label{eq:shapley-agg}
\phi_A(\mathbf{x})= \sum_{C\subseteq A} \frac{(-1)^{|A|-|C|}}{|A|-|C|+1}\; \min_{i\in C}x_i.
\end{equation}
This formulation allows us to learn a non-linear decision boundary where the weights directly represent the marginal contribution of features (and feature combinations) to the positive class probability.

\subsection{2-Additive Complexity and Interpretation}
A fully general Choquet integral would require $2^n$ parameters, which is computationally intractable and prone to overfitting. To ensure stability on small medical cohorts, we enforce \textbf{$k$-additivity}, specifically setting $k=2$. This choice provides the optimal performance-complexity as empirically attested.
This restricts the model to learning only \textbf{main effects} (singletons, $|A|=1$) and \textbf{pairwise interactions} ($|A|=2$). Higher-order interactions are forced to zero. This reduces the parameter space from exponential to quadratic ($\mathcal{O}(n^2)$), specifically $n + \binom{n}{2}$ coefficients.
The resulting model offers a transparent interpretation for clinicians:
 \paragraph{\textbf{Main Effects ($I(\{i\})$):}} The independent contribution of clinical feature $i$ (analogous to standard LR coefficients).
    \paragraph{\textbf{Pairwise Interactions ($I(\{i,j\})$):}}
    \begin{itemize}
       \item {\it Synergy}: $I > 0$. The presence of \textit{both} features increases the probability of APDS more than the sum of their individual parts (complementarity).\\
       \item {\it Redundancy}: $ I < 0$. The features provide overlapping information; observing the second adds less diagnostic value if the first is already present.\\
       \item {\it Independence}: $I \approx 0$. The features act independently.
\end{itemize}

\subsection{Model Training}\label{sec:model-training}
To train the model, we construct a design matrix $\Phi \in \mathbb{R}^{N \times p}$ where each column corresponds to a basis function $\phi_A(\mathbf{x})$ for $|A| \leq 2$. The model is then trained as a standard logistic regression on this expanded feature space by minimizing the regularized negative log-likelihood:
\begin{equation*}\label{eq:loss}
\mathcal{L}(\boldsymbol{\theta},\beta)=
-\sum_{i=1}^{N}\Bigl[
y^{(i)}\ln \hat{p}^{(i)}+
\bigl(1-y^{(i)}\bigr)\ln\!\bigl(1-\hat{p}^{(i)}\bigr)\Bigr] + \lambda\|\mathbf{I}\|_{q}
\end{equation*}
where $\mathbf{I}$ is the vector of Shapley indices. We employ $\ell_2$ regularization ($q=2$, Ridge), which encourages smoothness and, as shown in the following section, guarantees the stability of the explanations.

\subsection{Theoretical Generalisation and Stability Analysis}\label{sec:theoretical-bounds}

To interpret the empirical performance of the $k$-additive Choquet model, we analyse the generalisation gap—the difference between expected and empirical risk—using Structural Risk Minimization (SRM). We examine three distinct complexity regimes. Our goal is to explain why regularised models (Regimes 2 and 3) succeed in high-order settings where unregularised models (Regime 1) fail.

\paragraph{Notation.}
Let $\mathcal{H}_k$ denote the hypothesis class of $k$-additive Choquet integrals defined with respect to a cooperative game $v$. The complexity of this class depends on the dimension $D_k = \sum_{j=1}^{k} \binom{n}{j}$. We seek to bound the generalisation gap $R(h) - \hat{R}_N(h)$,
where $R$ and $\hat{R}$ denote respectively the true and empirical risk (error rate).

\subsubsection{Validation Protocol for Theoretical Bounds.}
To empirically validate the generalisation bounds derived in the following sections, we conducted controlled experiments using \textbf{random synthetic data} ($\mathbf{X} \sim U[0,1]^n, y \sim \text{Bernoulli}(0.5)$).
Using pure noise ensures that our measurements capture the intrinsic \textbf{capacity} and \textbf{stability} of the hypothesis class $\mathcal{H}_k$ itself, independent of any signal in the data. This ``worst-case'' testing confirms that the bounds hold for \textit{any} data distribution. Unless otherwise stated, validation plots use $N=100$ samples and $n=10$ features.

\subsubsection{Regime 1: Unregularised (Baseline VC Bound)}
In the absence of constraints, the model's capacity is determined purely by the count of its parameters. The generalisation gap is bounded using the Vapnik-Chervonenkis (VC) dimension $d_{\text{VC}} = D_k + 1$:
\begin{equation*}
R(h) - \hat{R}_N(h) \le \mathcal{O}\left(\sqrt{\frac{D_k}{N}}\right).
\end{equation*}
\\
\textbf{Limitation:} This bound scales linearly with $\sqrt{D_k}$. Since $D_k$ grows polynomially with $n$ ($\mathcal{O}(n^k)$), this bound becomes loose rapidly. It predicts that without regularisation, increasing $k$ requires an exponentially larger dataset to prevent overfitting.

\subsubsection{Regime 2: \texorpdfstring{$\ell_1$}{l1} Regularisation (Rademacher Bound)}
When applying $\ell_1$ regularisation, we restrict the hypothesis class to functions with bounded norm $\|\boldsymbol{\theta}\|_1 \le B$. The VC bound is overly pessimistic here because it ignores this constraint. A tighter bound is provided by the \textit{Rademacher Complexity}, which measures the ability of the constrained class to fit random noise:
\begin{equation*}
R(h) - \hat{R}_N(h) \le \mathcal{O}\left(B \sqrt{\frac{\ln(D_k)}{N}}\right).
\end{equation*}

\noindent \textbf{Comparison:} The crucial improvement over the VC bound is that the dependence on dimension $D_k$ improves from \textbf{polynomial} ($\sqrt{D_k}$) to \textbf{logarithmic} ($\sqrt{\ln D_k}$).
This implies that $\ell_1$-regularised models are significantly less sensitive to the ``curse of dimensionality.'' Mathematically, this justifies why we can explore high-order interactions (large $k$) using Lasso: the cost of adding extra dimensions is negligible, provided the underlying true model is sparse.

\subsubsection{Regime 3: \texorpdfstring{$\ell_2$}{l2} Regularisation (Algorithmic Stability)}
The $\ell_2$ penalty (Ridge) induces strict convexity in the loss function. This allows us to move away from counting parameters (VC dimension) and instead analyze the \textit{Algorithmic Stability} of the optimiser. Stability bounds guarantee that if the algorithm is robust to small changes in the training set, it will generalize well.

\noindent \textbf{The Bound:} For a loss function with Lipschitz constant $L$ and regularisation strength $\lambda$, the generalisation gap is bounded by the stability coefficient $\beta$:
\begin{equation*}
\mathbb{E}[R(h) - \hat{R}_N(h)] \le \beta \le \frac{2 L^2}{\lambda N}.
\end{equation*}

Where the term $L$ represents the Lipschitz constant of the loss function with respect to the parameters. In our context, $L$ scales with the maximum Euclidean norm of the feature vectors, $L \approx \sup_x \|\Phi_k(\mathbf{x})\|_2$.

\noindent \textbf{Comparison:}
While the numerator $L^2$ does grow with $k$ (as feature vectors get longer), this bound offers a crucial control mechanism that VC theory lacks: the regularisation parameter $\lambda$ in the denominator.
\begin{itemize}
  \item 
  In the \textbf{Unregularised (VC)} regime, the complexity is fixed by the architecture ($\sqrt{D_k}$). Overfitting is inevitable as $k$ grows.
  \item 
  In the \textbf{$\ell_2$ Stability} regime, we can actively counteract the growth of the feature space by increasing $\lambda$.
\end{itemize}

\begin{figure}[t]
  \centering
\includegraphics[width=0.7\linewidth]{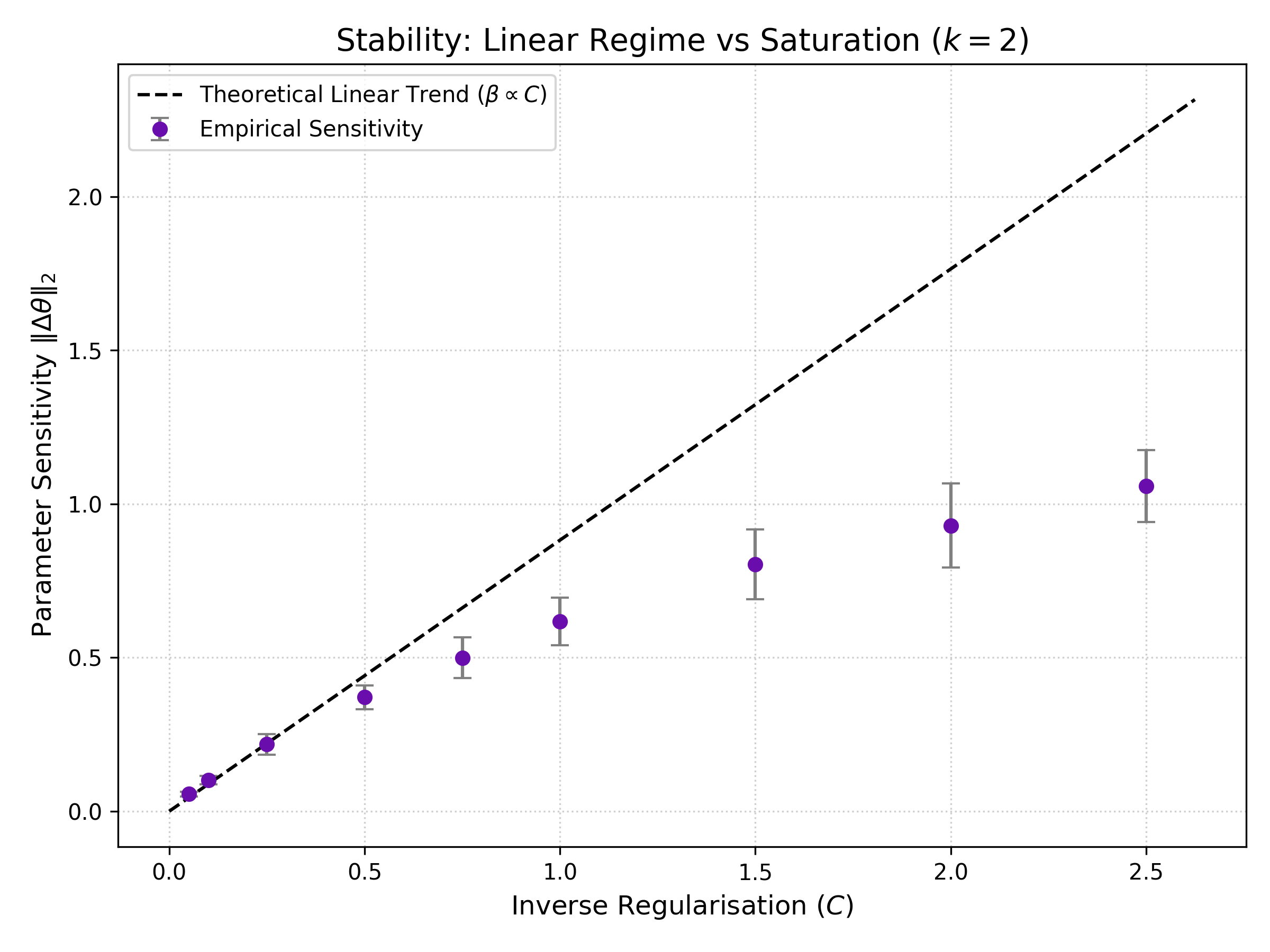}
  \caption{\textbf{Empirical Validation of Algorithmic Stability.}
  We measure 
  sensitivity (Euclidean shift $\|\hat{\boldsymbol{\theta}}_S - \hat{\boldsymbol{\theta}}_{S'}\|_2$ after flipping a single label) across varying regularisation strengths $C$.
  The dashed line represents the \textbf{theoretical linear bound} ($\beta \propto C$) derived in Regime 3.
  Crucially, the empirical points follow this linear trend in the high-regularisation regime (small $C$) but naturally deviate and \textbf{saturate} around $C \approx 1.5$ as the logistic gradients vanish. This confirms that the model adheres to the stability bound while being strictly \textit{more} robust than predicted in low-regularisation settings.}
  \label{fig:stability_proof}
\end{figure}

\subsubsection{The Effective Dimension Bound (Refined Capacity)}
\label{sec:effective_dim}

The combinatorial VC bound ($D_k$) overestimates capacity because the Shapley basis functions are structurally correlated. We propose a tighter bound based on the \textbf{Effective Dimension} ($d_{\text{eff}}$).

\paragraph{Definition}
Let $\mathbf{\Sigma} = \mathbb{E}[\Phi(\mathbf{x})\Phi(\mathbf{x})^\top]$ be the covariance matrix of the feature mapping $\Phi(\mathbf{x})$. The effective dimension, or stable rank, is defined as:
\begin{equation*}
  d_{\text{eff}} = \frac{\left( \text{Tr}(\mathbf{\Sigma}) \right)^2}{\text{Tr}(\mathbf{\Sigma}^2)} = \frac{\left( \sum_{i=1}^{D_k} \lambda_i \right)^2}{\sum_{i=1}^{D_k} \lambda_i^2}
\end{equation*}
where $\lambda_i$ are the eigenvalues of $\mathbf{\Sigma}$.

\paragraph{Result}
For kernel-based methods (including Ridge regression on the Shapley basis), the refined bound replaces $D_k$ with $d_{\text{eff}}$:
\begin{equation*}
  R(h) \le \hat{R}_S(h) + \mathcal{O}\left( \sqrt{\frac{d_{\text{eff}}}{N}} \right).
\end{equation*}

\begin{figure}[t]
  \centering
  \includegraphics[width=0.75\linewidth]{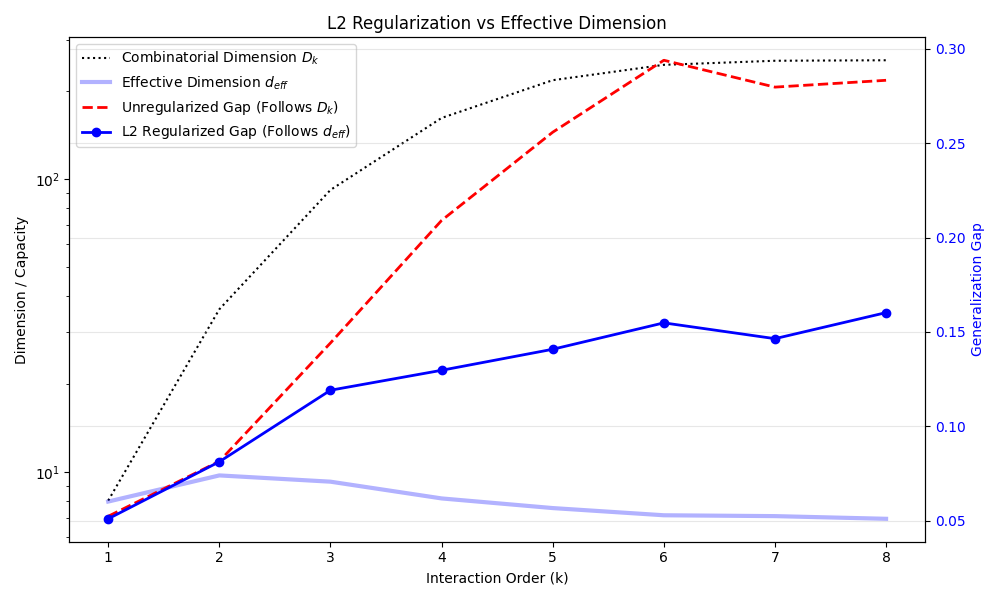}
  \caption{\textbf{Effective Dimension Validation.}
  Comparison of dimensions and generalization gaps (Settings: $N=1000, n=8$, averaged over 10 iterations).
  The plot demonstrates the divergence between the combinatorial dimension $D_k$ (black dotted line), which grows exponentially, and the effective dimension $d_{\text{eff}}$ (blue line), which saturates.
  Crucially, the generalization gap of the $\ell_2$-regularized model (blue dots) tracks the effective dimension, remaining stable even as $k$ increases, whereas the unregularized model (red dashed line) follows the exploding combinatorial complexity.}
  \label{fig:effective_dimension_proof}
\end{figure}

\subsubsection{Stability of Game-Theoretic Explanations}

While the previous sections established bounds on the predictive error, we now assess the robustness of our model's \textit{explanations}. An advantage of our architecture is that it is parameterised directly in the Shapley basis (Eq.~\ref{eq:shapley-agg}). Indeed, unlike models that require post-hoc attribution methods, which often introduce approximation noise, the coefficients of our model are the Shapley Interaction Indices.

We now formalise the connection between the regularisation used during training and the reliability of these game-theoretic explanations.

\begin{proposition}[Interaction Index Stability]
Let $\hat{\boldsymbol{\theta}}$ be the estimated parameter tuple, and $\boldsymbol{\theta}^*$ be the true tuple of the underlying game. In the Shapley-basis formulation, where the parameter $\theta_A$, $A\subseteq [n]$, is the Shapley Interaction Index $I(A)$, the $\ell_2$ regularisation provides a direct bound on the explanation variance. More precisely, if the parameter estimation error is bounded by $\varepsilon$ in the Euclidean norm, then the error in the interaction index for any coalition $A$ is bounded by $\varepsilon$.
\end{proposition}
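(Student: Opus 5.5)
The argument is essentially a single inequality once we make the identification explicit. The point of the Shapley-basis parameterisation (Eq.~\ref{eq:shapley-agg}) is that the model's coefficient vector \emph{is} the tuple of interaction indices. So no change of basis or post-hoc attribution map sits between $\boldsymbol{\theta}$ and $\mathbf{I}$. I would therefore proceed in three steps.

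\textbf{First}, fix the coordinates. Write $\hat{\boldsymbol{\theta}} = (\hat\theta_A)_{A}$ and $\boldsymbol{\theta}^* = (\theta^*_A)_A$, where $A$ ranges over the nonempty coalitions with $|A|\le k$; there are $D_k$ of them. By construction of the model in Section~3.1, $\hat\theta_A = \hat I(A)$ and $\theta^*_A = I^*(A)$, where $\hat I$ and $I^*$ are the interaction indices of the estimated and true $k$-additive games. The coalitions with $|A|>k$ have index identically zero in both games, so their error is $0$ and they need no further treatment.

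\textbf{Second}, use the fact that a single coordinate is dominated by the Euclidean norm. For any fixed $A$,
\begin{equation*}
|\hat I(A) - I^*(A)| = |\hat\theta_A - \theta^*_A| = \Bigl(|\hat\theta_A-\theta^*_A|^2\Bigr)^{1/2} \le \Bigl(\sum_{B}|\hat\theta_B-\theta^*_B|^2\Bigr)^{1/2} = \|\hat{\boldsymbol{\theta}}-\boldsymbol{\theta}^*\|_2 \le \varepsilon .
\end{equation*}
Equivalently, $\hat I(A)-I^*(A) = \langle e_A, \hat{\boldsymbol{\theta}}-\boldsymbol{\theta}^*\rangle$, and Cauchy--Schwarz with $\|e_A\|_2=1$ gives the same bound. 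The bound holds uniformly in $A$, so in fact $\max_A |\hat I(A)-I^*(A)|\le\varepsilon$.

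\textbf{Third}, connect this to the $\ell_2$ regularisation, which justifies the phrase ``direct bound on the explanation variance''. The ridge penalty makes the objective $\lambda$-strongly convex. By the stability argument of Regime~3, replacing one sample moves the minimiser by at most $\mathcal{O}(L/(\lambda N))$ in $\|\cdot\|_2$, so one can take $\varepsilon$ of that order. The same argument applies to the deviation from a population minimiser, for instance under a well-specified model where that minimiser is $\boldsymbol{\theta}^*$. Combining this with the second step bounds the sensitivity, and hence the variance, of every interaction index by the same quantity.

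I expect this last link to be the only delicate point. The proposition is stated conditionally on $\|\hat{\boldsymbol{\theta}}-\boldsymbol{\theta}^*\|_2\le\varepsilon$, so I would keep the core proof at the first two steps and present the third as a remark. I would also note explicitly that the key structural fact is that the map from parameters to indices is the identity, hence $1$-Lipschitz. In a Möbius or capacity parameterisation this map is a nontrivial linear transform whose operator norm would inflate $\varepsilon$.
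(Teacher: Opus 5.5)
Your proposal is correct and follows the paper's proof essentially verbatim: the identity map $\theta_A = I(A)$ in the Shapley basis turns each explanation error into a single coordinate of $\hat{\boldsymbol{\theta}}-\boldsymbol{\theta}^*$, and that coordinate is dominated by the Euclidean norm. Keeping Step~3 as a remark is sound, and more careful than the paper, which simply asserts that ridge ``constrains'' $\|\hat{\boldsymbol{\theta}}-\boldsymbol{\theta}^*\|_2$; note, however, that strong convexity controls single-sample sensitivity and deviation from the population \emph{ridge} minimiser, which differs from $\boldsymbol{\theta}^*$ by a shrinkage bias even under a well-specified model.
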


\begin{proof}
We leverage the fundamental property of the Shapley basis parameterisation \cite{Grabisch1997KAdditive}: there is an identity mapping between the regression coefficient $\theta_A$ and the Shapley Interaction Index $I(A)$ for any coalition $A$. Therefore, an error in the explanation is exactly an error in the corresponding coefficient:
\begin{equation*}
|\hat{I}(A) - I^*(A)| = |\hat{\theta}_A - \theta^*_A|.
\end{equation*}
We now relate this individual error to the total model error. Let $\mathbf{e} = \hat{\boldsymbol{\theta}} - \boldsymbol{\theta}^*$ be the vector of estimated errors. The magnitude of any single component $|e_i|$ is strictly bounded by the Euclidean norm ($\ell_2$-norm) of the vector, i.e., $|e_i| \le \|\mathbf{e}\|_2$.
Since $\ell_2$ regularisation explicitly constraints the global parameter norm $\|\hat{\boldsymbol{\theta}} - \boldsymbol{\theta}^*\|_2 \le \epsilon$, it follows immediately that:
\begin{equation*}
|\hat{I}(A) - I^*(A)| \le \|\hat{\boldsymbol{\theta}} - \boldsymbol{\theta}^*\|_2 \le \epsilon.
\end{equation*}
This shows that constraining the global model complexity via Ridge regression imposes a hard ceiling on the instability of every individual interaction explanation.
\end{proof}

This result establishes a theoretical isomorphism between the model's algorithmic stability and its interpretability. While standard stability bounds (Regime 3) only guarantee robust \textit{predictions}, Proposition 1 guarantees robust \textit{explanations}.
Because $\ell_2$ regularisation explicitly minimizes the norm $\|\hat{\boldsymbol{\theta}}\|_2$, it imposes a hard ceiling on the variance of every Shapley interaction index as illustrated in Table~\ref{tab:summary_shapley_l2} in Supplementary Material (Appendix~\ref{app:performance_summary}).
As empirically demonstrated in Fig.~\ref{fig:stability_proof}, tightening the regularisation (lower $C$) linearly reduces the sensitivity $\beta$, which directly translates to trustworthy, noise-invariant game-theoretic explanations.

\section{Experimental Design: APDS Cohort}\label{sec:experiments}


In this section, we detail the data sourcing, phenotype extraction, and classification framework for the detection of APDS.

\subsection{Data Source and Cohort Selection}
We use French EHRs from the Necker hospital data warehouse (Dr. Warehouse) \cite{garcelon_clinician_2018}, where we identified a cohort of 222 patients (29 APDS and 193 Controls) from the Immunology department. Each patient had at least one hospital admission between 2018 and 2023. Our cohort is detailed in Supplementary Material (Appendix~\ref{app:data_apds_study}).

\subsection{Preprocessing}
We applied interquartile range (IQR) filtering to the number of clinical documents on the full dataset, excluding patients with excessively many or too few notes. Based on the IQR-derived lower and upper bounds, we retained only the first 20 notes in chronological order for each patient, and excluded patients with fewer than three notes to ensure sufficient clinical history for classification.



\subsection{Automated Phenotyping}
As structured phenotype data was unavailable, automated phenotyping of clinical notes was performed as described by \cite{otero_using_2022}. This approach combines a dictionary-based search for phenotypic terms with context-aware detection using deep learning. The dictionary relies on a subset of terms from the UMLS 2023AB vocabulary, focusing on semantic types related to phenotypes. The deep learning modules further enhance the extraction by performing disambiguation, subject attribution (assigning mentions to the patient versus family members), and negation detection (filtering out mentions of negated phenotypes).   
This process yielded a high-dimensional feature space of 2,563 unique phenotypes. Consequently, each patient $i$ is represented by a binary vector $\mathbf{x}_i \in \{0,1\}^{2563}$, where $x_{i,j}=1$ indicates the presence of phenotype $j$ in the patient's history. No dimensionality reduction was applied to preserve the interpretability of medical concepts.



\subsection{Classification Framework}
We formulated the task as a binary classification problem to distinguish APDS cases ($y=1$) from controls ($y=0$).

\paragraph{Baselines.}
We compared the proposed 2-additive SR against standard clinical scoring baselines and state-of-the-art ML models: $(i)$ classical \textbf{Logistic Regression (LR)} ~\cite{hosmer2013}, $(ii)$ tree-based ensembles suitable for tabular data representing few instances over many feautures, namely \textbf{Random Forest (RF)}~\cite{breiman2001} and \textbf{XGBoost} \cite{chen2016}, and $(iii)$ \textbf{feed-foward neural network (NN)} \cite{goodfellow2016}.


\section{Results and Discussion}


This section presents the empirical evaluation of SR, covering predictive performance, interpretability, and interaction. 


\subsection{Predictive Performance}
Table~\ref{tab:apds_performance} reports the classification performance across the nested cross-validation. The 2-additive SR model achieved the highest performance across several metrics, specifically reaching a {Balanced Accuracy of 0.941}, outperforming the linear baseline (0.907) and the non-linear XGBoost (0.884). Crucially for rare disease screening, the Shapley model demonstrated the highest {Sensitivity} (over 0.933), meaning it successfully identified the highest proportion of actual APDS patients. While Random Forest achieved higher specificity (fewer false positives), it did so at the cost of missing significant true cases (Sensitivity of 0.613). XGBoost demonstrated intermediate behavior, with higher variance across folds and lower balanced accuracy compared to the SR model. 

Despite comparable ROC-AUC values across models, differences in PR-AUC and F1-score highlight the importance of metrics that explicitly account for class imbalance in this setting. These results indicate that models achieving similar global discrimination may nonetheless differ substantially in their ability to detect rare cases. By optimizing for F1-score on the imbalanced cohort, our method strikes a clinically safer balance, prioritizing the detection of rare cases while maintaining high specificity (0.948).

Finally, we address the practical constraints of clinical deployment. We conducted thorough empirical study on several benchmark datasets with different characterics. To illustrate, we present in Table~\ref{tab:complexity} in the Supplementary Material (Appendix C), the results obtained on the Pima Indians Diabetes dataset\footnote{\url{https://www.kaggle.com/datasets/uciml/pima-indians-diabetes-database}} showing that the proposed model is computationally lightweight in both training and inference setting. These experiments were conducted on a standard CPU-based system, supporting the feasibility of widespread deployment in clinical settings without specialized hardware. Details of the CPU configuration used for the APDS experiments are provided in Appendix~\ref{app:data_apds_study}.



\begin{table*}[t]
\centering
\caption{Detailed clinical performance metrics across models using $\ell_2$ regularization. Results are reported as mean $\pm$ standard deviation over 5-nested cross-validation folds. Best-performing results are highlighted in bold. Underlined values indicate competing methods with comparable performance when standard deviation intervals overlap.}
\label{tab:apds_performance}
\resizebox{.9\textwidth}{!}{%
\begin{tabular}{lccccccc}
\toprule
\textbf{Model} & \textbf{Balanced Acc.} & \textbf{Sensitivity (Recall)} & \textbf{Specificity} & \textbf{Precision} & \textbf{F1-Score} & \textbf{ROC-AUC} & \textbf{PR-AUC}\\
\midrule
LR & \underline{$0.907 \pm 0.063$} & \underline{$0.860 \pm 0.127$} & \underline{$0.953 \pm 0.050$} & $0.789 \pm 0.212$ & \underline{$0.798 \pm 0.123$} &  \underline{$0.985 \pm 0.013$} & \underline{$0.941 \pm 0.044$} \\
RF & $0.804 \pm 0.052$ & $0.613 \pm 0.113$ & $\mathbf{0.995 \pm 0.010}$ & $\mathbf{0.960 \pm 0.080}$ & $0.737 \pm 0.060$ & \underline{$0.962 \pm 0.043$} & \underline{$0.893 \pm 0.071$}\\
XGBoost & $0.884 \pm 0.081$ & $0.793 \pm 0.163$ & $0.974 \pm 0.001$ & $0.810 \pm 0.033$ & \underline{$0.795 \pm 0.104$} & $0.914 \pm 0.061$ & $0.853 \pm 0.086$ \\
NN & \underline{$0.916 \pm 0.057$} & \underline{$0.893\pm 0.088$} & \underline{$0.938 \pm 0.053$} & $0.723 \pm 0.198$ & \underline{$0.784 \pm 0.138$} & $\mathbf{0.988 \pm 0.012}$ & $\mathbf{0.952 \pm 0.035}$ \\
\textbf{SR (k=2)} & $\mathbf{0.941 \pm 0.064}$ & $\mathbf{0.933 \pm 0.133}$ & \underline{$0.948 \pm 0.046$} & $0.760 \pm 0.168$ & $\mathbf{0.821 \pm 0.121}$ & \underline{$0.982 \pm 0.019$} & \underline{$0.936 \pm 0.062$} \\
\bottomrule
\end{tabular}%
}
\end{table*}

\subsection{Shapley Regression Interaction Analysis}
To analyze relationships among clinical phenotypes, we leveraged the game-theoretic structure of the SR model, which explicitly decomposes predictions into main effects and pairwise interaction terms. 

Firstly, we extracted Shapley-based coefficients from each fold of the nested cross-validation to identify phenotypes contributing to positive APDS predictions. The top 20 phenotypes, ranked by mean coefficients across folds, were subsequently reviewed by a clinical expert to distinguish known APDS manifestations (Table~\ref{tab:clinical_features_combined}, Supplementary Material, Appendix~\ref{app:apds_study}).
Secondly, we examined pairwise interactions learned by the SR model. For each trained model, the interaction coefficients were arranged into a symmetric interaction matrix, where entry $(i,j)$ represents the learned interaction strength between phenotypes $i$ and $j$. These interaction matrices were then aggregated across cross-validation folds by computing the mean interaction strength for each phenotype pair, yielding a consensus interaction matrix that reflects stable interaction patterns across training splits.

We focused the analysis on the most influential phenotypes, we computed a global interaction strength score for each phenotype by summing the absolute values of its interactions with all other phenotypes. The top 30 phenotypes with the highest total interaction strength were selected, and the interaction matrix was restricted to this subset.
To ensure robustness, we further assessed the stability of each interaction by calculating the fraction of folds in which the corresponding interaction coefficient was non-zero. Interactions observed in fewer than 70\% of folds were considered unstable and excluded from visualization. For the selected set of phenotypes, we examined both individual main effects, illustrated as a bar plot ( Figure~\ref{fig:apds_main_effects})
, and pairwise phenotype interactions, represented as an interaction matrix (Figure~\ref{fig:apds_coeff}). The filtered interaction matrix was visualized as a heatmap, with color intensity indicating the mean interaction strength and sign across folds. Both the main effects plot and the interaction heatmap were jointly reviewed with a clinical expert to evaluate the plausibility of the interaction patterns identified by the SR model. Among the 30 phenotypes analyzed, 17 were identified as core clinical manifestations associated with APDS.


\begin{figure}[t]
  \centering
  \includegraphics[width=0.9\linewidth]{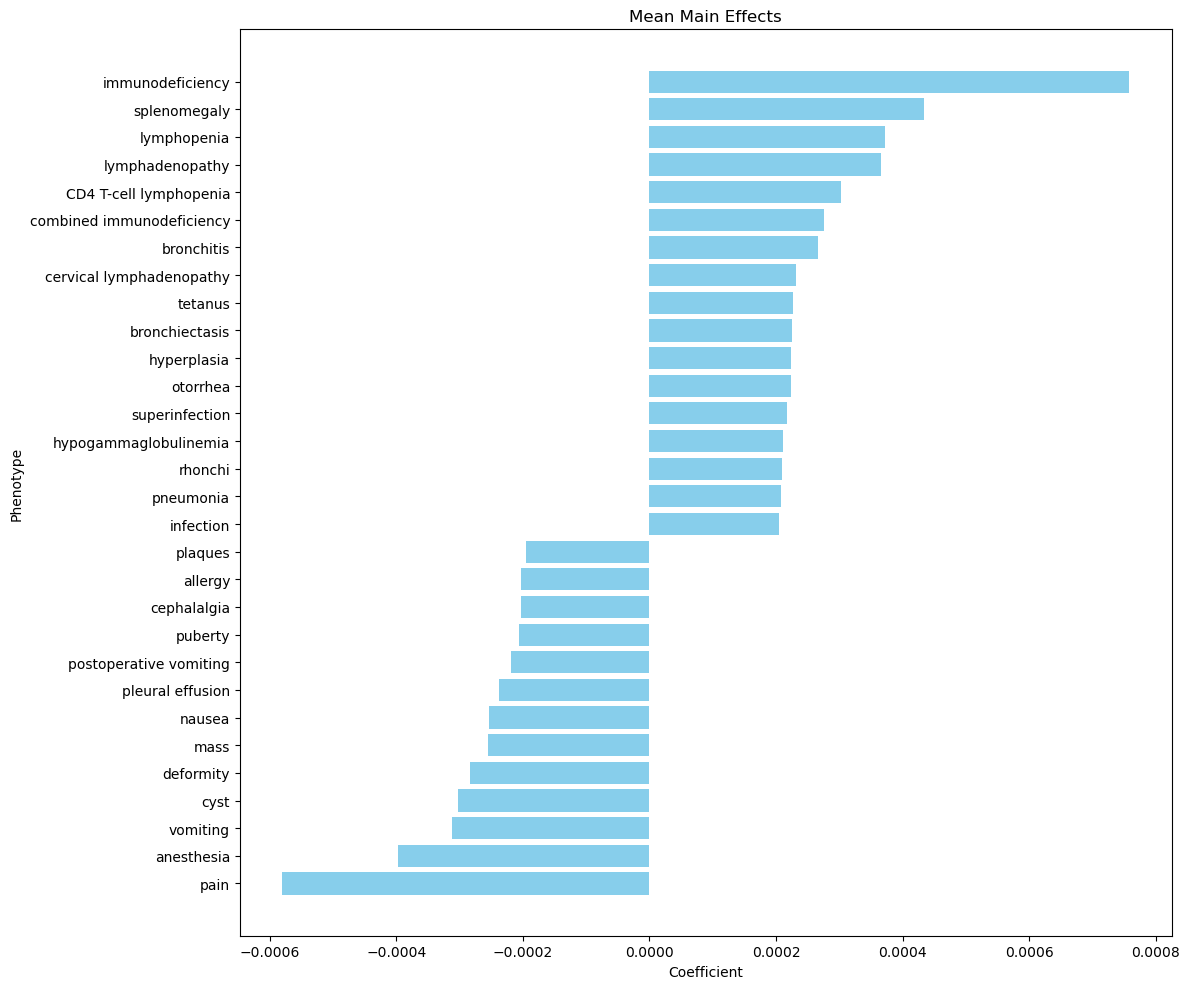}
  \caption{Marginal contribution (main effects) among phenotypes in APDS dataset.} 
  \label{fig:apds_main_effects}
\end{figure}

\begin{figure}[t]
  \centering
  \includegraphics[width=0.9\linewidth]{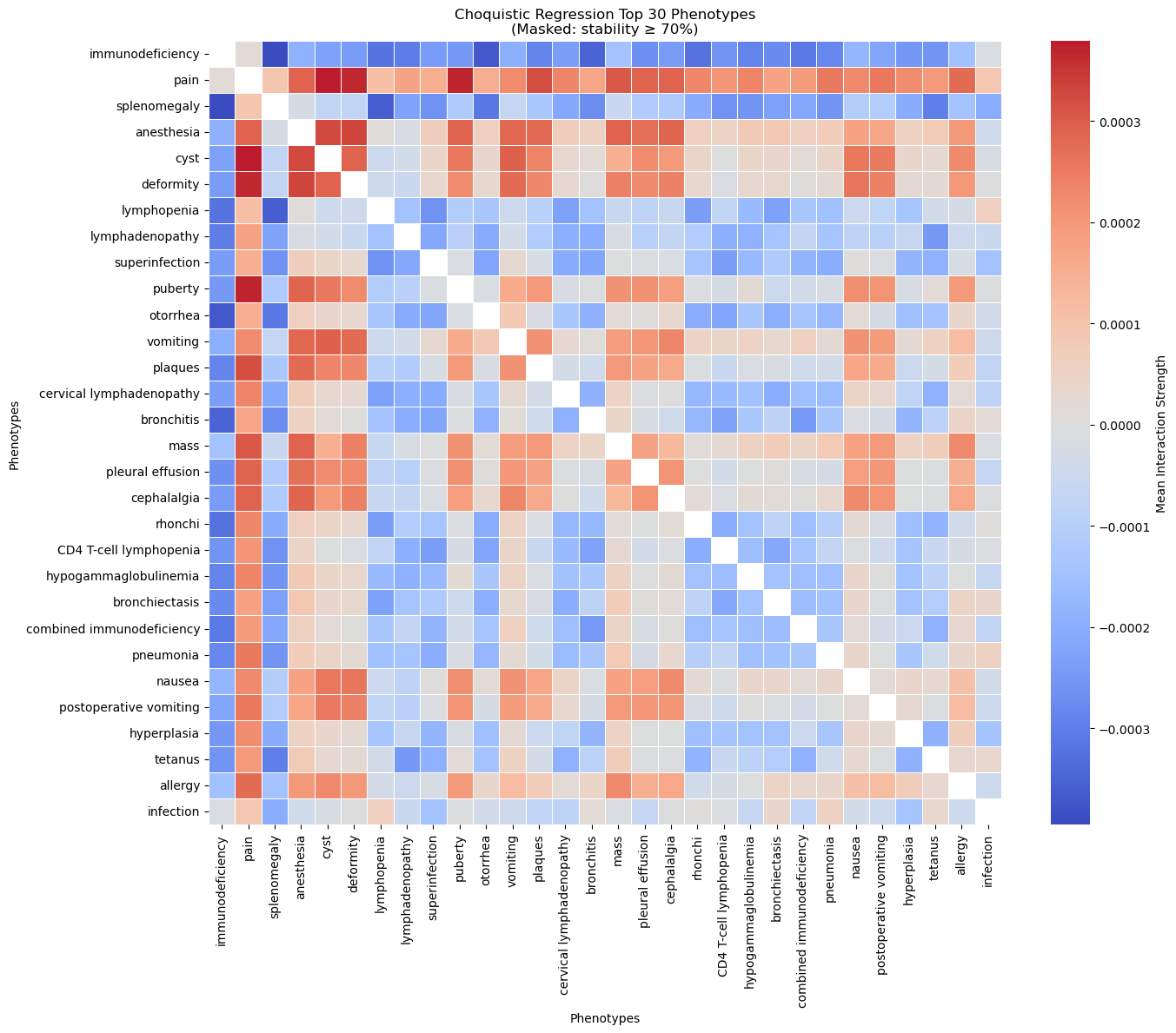}
  \caption{\textbf{Interaction effects among phenotypes in APDS dataset.} The heatmap illustrates the mean pairwise interaction strength between the top interacting phenotypes. Red indicates positive interaction (synergy), while blue indicates negative (redundancy).
  }
  \label{fig:apds_coeff}
\end{figure}

As shown in Figure~\ref{fig:apds_main_effects}, \textit{tetanus} and \textit{rhonchi} emerged as positive predictors of APDS. This finding aligns with previous analyses discussed in Supplementary Material (Appendix~\ref{app:exp_apds_study}), where \textit{tetanus} was attributed to vaccination records, and \textit{rhonchi} reflected respiratory signs associated with other symptoms. Interestingly, two phenotypes recognized by the clinical expert as APDS-associated—\textit{pleural effusion} and \textit{allergy}—appeared to contribute to classifying individuals as not having APDS. This counterintuitive pattern likely reflects overlapping phenotypes present in both APDS and control cohorts.

To further explore these relationships, the filtered interaction matrix was visualized as a heatmap (Figure~\ref{fig:apds_coeff}), illustrating the pairwise interactions among the selected phenotypes. In this heatmap, five core APDS phenotypes—\textit{immunodeficiency}, \textit{splenomegaly}, \textit{lymphopenia}, \textit{lymphadenopathy}, and \textit{superinfection}—displayed predominantly negative interactions with many other phenotypes. Within the SR framework, such negative interactions reflect redundancy rather than antagonism: these features are already highly informative individually, and their co-occurrence with additional phenotypes contributes limited incremental information to the model. This behavior is consistent with clinical manifestations commonly reported for APDS patients in Orphanet \cite{pavan_clinical_2017}, indicating that the model appropriately captures their central role in disease characterization.
Other APDS-associated phenotypes, including \textit{otorrhea}, \textit{cervical lymphadenopathy}, \textit{bronchitis}, \textit{CD4 T-cell lymphopenia}, \textit{hypogammaglobulinemia}, \textit{bronchiectasis}, \textit{combined immunodeficiency}, \textit{hyperplasia}, \textit{pneumonia}, \textit{allergy}, and \textit{infection}, exhibited a more heterogeneous pattern of positive, negative, and neutral interactions. Notably, several of these phenotypes showed negative interactions with core APDS features, again suggesting partial redundancy with already informative immunological and lymphoproliferative manifestations. This variability indicates that their contribution to APDS characterization might be context-dependent and modulated by the presence of other clinical findings. 

In contrast, although \textit{pleural effusion} and \textit{allergy} appeared negatively associated with APDS in isolation, they demonstrated strong positive interactions with phenotypes not traditionally linked to APDS. For example, \textit{pleural effusion} showed positive associations with \textit{pain}, \textit{anesthesia}, and \textit{deformity}, among others. The interaction matrix also highlights the role of phenotypes not classically associated with APDS, including \textit{pain}, \textit{anesthesia}, \textit{cyst}, \textit{deformity}, and \textit{cephalalgia}, which tended to interact positively with multiple other phenotypes, both APDS-associated and non-associated. For instance, positive interactions were observed among non-APDS phenotypes themselves, such as \textit{nausea} and \textit{cephalalgia}. These patterns suggest that such phenotypes may reflect downstream clinical consequences, comorbidities, or procedural contexts rather than primary drivers of the disease. Interestingly, \textit{pain} exhibited widespread positive interactions with both APDS and non-APDS phenotypes, despite not emerging as a strong individual predictor of APDS. This observation is consistent with clinical expectations, as pain is a non-specific symptom whose diagnostic significance depends largely on its context and association with other clinical manifestations rather than its isolated occurrence.

Overall, the interaction analysis highlights a clear distinction between core APDS phenotypes that act as strong standalone predictors and secondary or contextual phenotypes whose importance emerges primarily through interactions. This demonstrates the utility of the game-theoretic framework in disentangling redundant, complementary, and context-dependent clinical signals, providing an interpretable view of symptom co-occurrence patterns in APDS.

\section{Conclusion}

We presented a framework for interpretable rare disease diagnostic support, evaluated on APDS. By employing a 2-additive Shapley model with robust regularization, we effectively addressed the high dimensionality and data scarcity inherent to this domain. Our results show that Shapley Regression is competitive with state-of-the-art models such as XGBoost in predictive accuracy. In our use case, it also improves sensitivity, albeit with a modest decrease in specificity, while providing a key advantage: explanatory global Shapley Interaction Indices that offer insights into phenotype interactions.



For APDS, this framework allows us to confirm known APDS-associated phenotypes, as reported in Orphanet \cite{pavan_clinical_2017} and recognized by clinicians (see Table~\ref{tab:clinical_features_combined} in Supplementary Material, Appenidx~\ref{app:exp_apds_study}). More importantly, it enables systematic analysis of how phenotypes jointly contribute through interactions rather than in isolation. This interaction-focused analysis provides a complementary perspective on symptom co-occurrence patterns not captured by traditional models and has the potential to improve our understanding of disease mechanisms and clinical manifestations.
While the current cohort size limits the strength of predictive conclusions, increasing the number of APDS patients would allow for more robust validation of the identified interactions, improved predictive performance, and greater confidence in the stability and clinical relevance of the results. These findings suggest that game-theoretic models such as SR may serve as useful exploratory tools for understanding complex phenotypic interactions in rare diseases. 

As future work, we plan to further investigate theoretical bounds (Sec.~\ref{sec:theoretical-bounds}), considering dataset characteristics and capacity distributions. Also, we saw that SR outperforms the other models considered in terms of time complexity, when the number of features is reasonable (see Table \ref{tab:complexity} in Supplementary Material, Appendix~\ref{app:complexity}). However, sampling techniques should be developed for high number of features. Finally, negative interactions, which are of particular interest for medical applications like APDS detection, will be explored in collaboration with our clinical team.


\section*{Ethical Statement}

We propose a lightweight approach that achieves performance competitive while remaining computationally efficient. Notably, the proposed method can be easily implemented on standard personal computers using only CPU resources.

This research was approved by the Necker Hospital Data Privacy Officer under the registration number \#2026 04171640. The study relied exclusively on collected clinical data that were fully de-identified prior to analysis, and all computational findings were reviewed in conjunction with expert clinical interpretation. A language model (ChatGPT) was used to assist in editing text for clarity and readability. 


\section*{Acknowledgments}
This work was partially supported by the French National Research Agency (ANR)\footnote{Agence Nationale de la Recherche.} project ``Analogies: from theory to tools and applications'' (AT2TA), ANR-22-CE23-0023, and partially funded under the ``Investissements d’Avenir'' program, ANR-10-IAHU-01.
This work was also supported by Portuguese national funds through FCT under project UIDB/50021/2020 (DOI: 10.54499/UIDB/50021/2020).
Additionally, the work was partially supported by the project Center for Responsible AI, reference C628696807-00454142. 


\newpage
\bibliographystyle{named}
\bibliography{ijcai26}

\clearpage
\appendix

\section{Mathematical Foundations}\label{sec:math-foundations}
In the main text, we presented Shapley Regression directly in terms of interaction indices to facilitate clinical interpretation. Here, we provide the underlying measure-theoretic definitions that justify this decomposition.

\subsection{Capacities and Möbius Transforms}
The proposed model is grounded in the theory of capacities (fuzzy measures).
\begin{definition}[Capacity]
A capacity on a set of features $\mathcal{F} = \{1,\ldots,n\}$ is a set function $\mu: 2^\mathcal{F} \rightarrow [0,1]$ satisfying boundary conditions $\mu(\emptyset)=0, \mu(\mathcal{F})=1$ and monotonicity ($A \subseteq B \Rightarrow \mu(A) \leq \mu(B)$).
\end{definition}

The \textbf{Möbius transform} $m_\mu: 2^\mathcal{F} \rightarrow \mathbb{R}$ provides a canonical representation of the capacity, defined as:
\begin{equation*}
m_\mu(A) = \sum_{B \subseteq A} (-1)^{|A \setminus B|} \mu(B), \quad \forall A \subseteq \mathcal{F}
\end{equation*}

The Choquet integral with respect to $\mu$ can be computed using the Möbius coefficients:
\[
C_\mu(x)=\sum_{T\subseteq\mathcal{F}}m(T)\,\min_{i\in T}x_i
\]
The $k$-additive assumption used in the main paper corresponds to setting $m(A) = 0$ for all $|A| > k$.

\subsection{Relationship to Shapley Indices}
The Shapley Interaction Indices $I(A)$ used in our regression formulation are linearly related to the Möbius coefficients. They can be recovered via the transformation:
\begin{equation*}
I(A) = \sum_{B \subseteq \mathcal{F} \setminus A} \frac{(n-|B|-|A|)!|B|!}{(n-|A|+1)!} m(A \cup B)
\end{equation*}
This linear bijection allows us to optimize the model directly in the Shapley basis, as performed in Eq.~\ref{eq:shapley-agg}, ensuring that the learned weights naturally correspond to game-theoretic interactions.

\subsection{Interpretability Analysis} \label{interpretability-analysis}

Recall from Section~\ref{sec:model-training} (Eq.~\eqref{eq:shapley-agg}) that, once the
game values are expressed in the Shapley basis, each logistic coefficient
\(\phi_{A}\) equals the Shapley interaction index \(I(A)\). Plotting
these coefficients therefore reveals directly which terms up to size $k$ the model deems most influential.

\paragraph{Gradient Consistency}
A unique theoretical advantage of the Game-based formulation is the exact consistency between global and local explanations. For the Lovász extension of any game $v$, Grabisch \cite{Grabisch2016SetFunctions} proves that the Shapley value $\phi_i(v)$ is exactly the expected partial derivative of the function over the domain $[0,1]^n$:
\begin{equation*}
  \theta_{\{i\}}^{\text{Shapley}} = \int_{[0,1]^n} \frac{\partial \text{logit}(x)}{\partial x_i} \, dx
\end{equation*}
This property guarantees that our learned coefficients are not merely heuristic attribution scores, but faithful representations of the model's average sensitivity to each feature. By learning $\boldsymbol{\theta}$ directly in the Shapley basis, SR aligns the optimization objective (minimizing loss) with the explanation objective (accurate sensitivity attribution).

\newpage

\section{Proofs of Generalization Bounds}\label{app:proofs}

In Section~\ref{sec:theoretical-bounds}, we presented generalisation bounds for different complexity regimes. Here, we provide formal derivations for the $\ell_2$-regularised case (Algorithmic Stability) and the $\ell_1$-regularised case (Rademacher Complexity).

\subsection{Algorithmic Stability (Regime 3)}
We rely on the framework of \textit{Uniform Stability} introduced by Bousquet and Elisseeff.

\begin{definition}[Uniform Stability]
A learning algorithm $\mathcal{A}$ has uniform stability $\beta$ if, for any dataset $S$ of size $N$ and any modification $S^{\prime}$ obtained by replacing a single example in $S$, the change in the loss of the learned hypothesis $h_S$ is bounded by:
\[
\sup_{z} | \ell(h_S, z) - \ell(h_{S^{\prime}}, z) | \leq \beta
\]
\end{definition}

\begin{lemma}[Stability of Regularised Loss]
Let the loss function $\ell(\theta, z)$ be convex and $L$-Lipschitz with respect to $\theta$. If we minimize a regularized objective $J(\theta) = \frac{1}{N}\sum \ell(\theta, z_i) + \frac{\lambda}{2}\|\theta\|_2^2$, then the algorithm has stability:
\[
\beta \leq \frac{L^2}{\lambda N}
\]
\end{lemma}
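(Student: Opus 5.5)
The plan is the classical Bousquet--Elisseeff argument for strongly convex regularised empirical risk minimisation. There are two steps: first bound the distance between the two minimisers, then transfer that bound to the loss through the Lipschitz property. Write $J_S(\theta)=\frac1N\sum_{j}\ell(\theta,z_j)+\frac{\lambda}{2}\|\theta\|_2^2$. Let $S'$ differ from $S$ only in the $i$-th example ($z_i$ replaced by $z_i'$). Let $\theta_S=\arg\min J_S$ and $\theta_{S'}=\arg\min J_{S'}$, and set $\Delta=\theta_{S'}-\theta_S$. Since each $\ell(\cdot,z)$ is convex and the penalty is $\lambda$-strongly convex, both $J_S$ and $J_{S'}$ are $\lambda$-strongly convex. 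Hence the minimisers exist, are unique, and satisfy the quadratic growth inequalities
\[
J_S(\theta_{S'})-J_S(\theta_S)\ge \tfrac{\lambda}{2}\|\Delta\|_2^2,\qquad J_{S'}(\theta_S)-J_{S'}(\theta_{S'})\ge \tfrac{\lambda}{2}\|\Delta\|_2^2 .
\]

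Next, add the two inequalities. All shared terms cancel: the $N-1$ common losses and the penalty. What remains is
\[
\lambda\|\Delta\|_2^2\le \tfrac1N\bigl[\ell(\theta_{S'},z_i)-\ell(\theta_S,z_i)+\ell(\theta_S,z_i')-\ell(\theta_{S'},z_i')\bigr]\le \tfrac{2L}{N}\|\Delta\|_2,
\]
where the last step uses $L$-Lipschitzness in $\theta$ twice. This gives $\|\Delta\|_2\le 2L/(\lambda N)$. Then, for any $z$, $|\ell(\theta_S,z)-\ell(\theta_{S'},z)|\le L\|\Delta\|_2$, which yields uniform stability of order $L^2/(\lambda N)$. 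An equivalent route, which I may use instead because it is cleaner, goes through strong monotonicity of gradients. Since $\nabla J_S(\theta_S)=0$, we have $\lambda\|\Delta\|_2^2\le\langle\nabla J_S(\theta_{S'}),\Delta\rangle$. Moreover $\nabla J_S(\theta_{S'})=\nabla J_S(\theta_{S'})-\nabla J_{S'}(\theta_{S'})=\frac1N(g(\theta_{S'},z_i)-g(\theta_{S'},z_i'))$, whose norm is at most $2L/N$. Non-differentiable $\ell$ can be handled with subgradients throughout.

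The main obstacle is the constant. The replace-one argument above naturally gives $\beta\le 2L^2/(\lambda N)$, which is the constant quoted in the main text (Regime 3), not the $L^2/(\lambda N)$ in the lemma. To obtain the sharper constant I would use the remove-one notion of stability, with $S^{\setminus i}$ having $N-1$ points, normalised consistently. There only one asymmetric term survives, so the same computation gives $\|\Delta\|_2\le L/(\lambda N)$ and hence $\beta\le L^2/(\lambda N)$. The replace-one bound then follows by the triangle inequality through $S^{\setminus i}$, and this is exactly where the factor $2$ reappears. I would state clearly which variant delivers which constant, rather than claim $L^2/(\lambda N)$ for replacement.

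Finally, to connect with Regime 3, I would note that the logistic loss on the Shapley design $\Phi(\mathbf{x})$ has $L\le\sup_x\|\Phi(\mathbf{x})\|_2$, because its derivative in the margin is bounded by $1$. The bound $\mathbb{E}[R-\hat R_N]\le\beta$ then follows from the standard Bousquet--Elisseeff expectation identity.
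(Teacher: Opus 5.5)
Your proof is correct, and it does more than the paper does. The paper never proves this lemma: it attributes the framework to Bousquet--Elisseeff, and its ``proof sketch'' only checks the hypothesis for Shapley regression. Specifically, it shows that the logistic loss on the Shapley design is $B_\Phi$-Lipschitz with $B_\Phi=\sup_x\|\Phi(x)\|_2$, since the margin derivative is bounded by $1$, and then plugs this in to reach $2B_\Phi^2/(\lambda N)$. Your last paragraph reproduces that step. Your quadratic-growth argument (equivalently, the strong-monotonicity argument) supplies the missing core.

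Your tracking of the constant is also right, and it exposes an inconsistency in the paper. Under the paper's own Definition of uniform stability, which \emph{replaces} one example, the bound $L^2/(\lambda N)$ fails in general. Take $\ell(\theta,z)=Lz\theta$ on $\mathbb{R}$ with $z\in\{-1,1\}$, which is convex and $L$-Lipschitz in $\theta$. Then $\theta_S=-\frac{L}{\lambda N}\sum_j z_j$. Flipping one $z_i$ from $1$ to $-1$ moves the minimiser by exactly $2L/(\lambda N)$, so $\sup_z|\ell(\theta_S,z)-\ell(\theta_{S'},z)|=2L^2/(\lambda N)$.

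So the stated constant is the remove-one constant, with the weight $1/N$ kept on the remaining $N-1$ losses as you say. Renormalising by $1/(N-1)$ would bring the factor $2$ back: take $z_i=1$ and all other $z_j=-1$ in the same example. The $2L^2/(\lambda N)$ used in Regime 3 and at the end of the paper's sketch is the replace-one constant.

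One caution on your final sentence. The Bousquet--Elisseeff identity behind $\mathbb{E}[R-\hat R_N]\le\beta$ is a replace-one statement, so it reads $2\beta$ if $\beta$ is the remove-one constant. It also controls the risk of the logistic loss itself, not the error rate that the paper denotes by $R$.
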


\begin{proof}[Proof Sketch for Shapley Regression]
Our loss function (Logistic Regression on the Shapley basis) is:
\[
\mathcal{L}(\theta) = \log(1 + \exp(-y \langle \theta, \Phi(x) \rangle))
\]
The gradient with respect to $\theta$ is bounded by the norm of the feature vector:
\[
\|\nabla_\theta \mathcal{L}\|_2 \le \sup_{x} \|\Phi(x)\|_2
\]
For a $k$-additive Choquet integral ($k=2$), the feature mapping $\Phi(x)$ consists of $D_k$ basis functions $\phi_A(x)$. Since inputs $x \in [0,1]^n$, the basis functions are bounded. Let $B_\Phi = \sup \|\Phi(x)\|_2$. Then the loss is $B_\Phi$-Lipschitz.
Applying the Lemma yields the bound stated in Eq.~(6):
\[
R(h) - \hat{R}_N(h) \le \frac{2 B_\Phi^2}{\lambda N}
\]
This confirms that increasing $\lambda$ directly reduces the generalization gap, compensating for the increase in dimensionality $B_\Phi^2$.
\end{proof}

\subsection{Rademacher Complexity (Regime 2)}
For $\ell_1$ regularisation, we restrict $\mathcal{H}$ to linear predictors with $\|\theta\|_1 \le B$.

\begin{theorem}[Rademacher Bound for $\ell_1$ Constraints]
For linear predictors $f(x) = \langle \theta, x \rangle$ with $\|\theta\|_1 \le B$ and $\|x\|_\infty \le X_\infty$, the empirical Rademacher complexity is bounded by:
\[
\hat{\mathfrak{R}}_S(\mathcal{H}) \le B X_\infty \sqrt{\frac{2 \ln(2n)}{N}}
\]
\end{theorem}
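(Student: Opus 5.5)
We prove the bound with the standard duality argument for empirical Rademacher complexity. Everything reduces to a finite maximum of $2n$ Rademacher averages, which Massart's finite class lemma then controls.

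Fix the sample $S=(x^{(1)},\dots,x^{(N)})$ with $\|x^{(i)}\|_\infty\le X_\infty$. Write $\sigma_1,\dots,\sigma_N$ for i.i.d.\ Rademacher signs. By definition,
\[
\hat{\mathfrak{R}}_S(\mathcal{H})=\mathbb{E}_\sigma\Bigl[\sup_{\|\theta\|_1\le B}\frac{1}{N}\sum_{i=1}^N\sigma_i\langle\theta,x^{(i)}\rangle\Bigr]
=\frac{1}{N}\mathbb{E}_\sigma\Bigl[\sup_{\|\theta\|_1\le B}\Bigl\langle\theta,\sum_{i=1}^N\sigma_i x^{(i)}\Bigr\rangle\Bigr].
\]

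\textbf{Step 1 (duality).} The dual of the $\ell_1$ norm is the $\ell_\infty$ norm. By H\"older's inequality, with equality attained at a signed vertex of the $\ell_1$ ball, the supremum equals
\[
B\Bigl\|\sum_i\sigma_i x^{(i)}\Bigr\|_\infty=B\max_{j\in[n],\,s\in\{\pm1\}}\sum_i\sigma_i\, s\, x^{(i)}_j .
\]
Hence $\hat{\mathfrak{R}}_S(\mathcal{H})\le \frac{B}{N}\,\mathbb{E}_\sigma\max_{a\in\mathcal{A}}\sum_i\sigma_i a_i$. Here $\mathcal{A}\subset\mathbb{R}^N$ is the finite set of $2n$ vectors $a^{(j,s)}=(s\,x^{(i)}_j)_{i=1}^N$.

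\textbf{Step 2 (Massart).} Each of these vectors satisfies $\|a\|_2\le\sqrt{N}X_\infty$. Massart's lemma gives $\mathbb{E}_\sigma\max_{a\in\mathcal{A}}\sum_i\sigma_ia_i\le \max_a\|a\|_2\sqrt{2\ln|\mathcal{A}|}$. If we prefer to prove this step inline, we use the exponential-moment argument. Jensen's inequality gives, for any $t>0$,
\[
\exp\Bigl(t\,\mathbb{E}\max_a\sum_i\sigma_ia_i\Bigr)\le\sum_a\prod_i\mathbb{E}e^{t\sigma_ia_i}.
\]
Hoeffding's lemma bounds each factor by $e^{t^2a_i^2/2}$, so the right-hand side is at most $|\mathcal{A}|e^{t^2NX_\infty^2/2}$. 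Taking logarithms, dividing by $t$, and choosing $t=\sqrt{2\ln|\mathcal{A}|}/(\sqrt{N}X_\infty)$ yields the bound.

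Substituting $|\mathcal{A}|=2n$ gives
\[
\hat{\mathfrak{R}}_S(\mathcal{H})\le\frac{B}{N}\sqrt{N}X_\infty\sqrt{2\ln(2n)}=BX_\infty\sqrt{\frac{2\ln(2n)}{N}},
\]
which is the claim.

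\textbf{Main obstacle.} The only nontrivial step is Step 2, the sub-Gaussian maximal inequality. Step 1 is mechanical once the $\ell_1$-ball supremum is identified with a maximum over its $2n$ vertices $\pm Be_j$; this holds because a linear functional over a polytope is maximized at a vertex.

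\textbf{Application to Shapley regression.} We apply the theorem with the features $x$ replaced by $\Phi_k(x)$, so $n$ becomes $D_k$. This requires a uniform bound on $\|\Phi_k(x)\|_\infty$ independent of $n$. From Eq.~\eqref{eq:shapley-agg}, $|\phi_A(x)|\le\sum_{j=0}^{|A|}\binom{|A|}{j}\frac{1}{j+1}$, which is at most $2^{k}$ for $|A|\le k$. Thus $X_\infty$ depends only on $k$, which yields the $\sqrt{\ln D_k/N}$ rate stated in Regime 2.
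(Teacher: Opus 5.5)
\textbf{Verdict: your proof is correct, and it supplies a derivation the paper omits.}

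Hölder duality turns the supremum over the $\ell_1$ ball into $B\|\sum_i\sigma_i x^{(i)}\|_\infty$, a maximum over $2n$ fixed vectors of Euclidean norm at most $\sqrt{N}X_\infty$. Massart's lemma then gives exactly $BX_\infty\sqrt{2\ln(2n)/N}$; your inline Jensen--Hoeffding derivation and your choice of $t$ both check out.

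The paper never derives this bound. It calls it ``the standard Rademacher bound'', and its proof environment only substitutes $n\mapsto D_k$ and $X_\infty=1$ (from the asserted $|\phi_A(x)|\le 1$) before writing down $R(h)-\hat{R}_N(h)\le 2B\sqrt{2\ln(2D_k)/N}$. Your duality-plus-Massart argument is the textbook proof the paper relies on implicitly, so it makes the statement self-contained.

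\textbf{Application step.} The two routes differ only in the bound on the features.
\begin{itemize}
\item Your bound $|\phi_A(x)|\le\sum_{j}\binom{|A|}{j}\frac{1}{j+1}=\frac{2^{|A|+1}-1}{|A|+1}\le 2^k$ follows from Eq.~\eqref{eq:shapley-agg} for every $k$, taking the empty-coalition term to be $0$. Because it does not depend on $n$, it already yields the $\sqrt{\ln D_k/N}$ rate.
\item The paper's sharper $X_\infty=1$ is immediate for $k=2$, the case actually used: there $\phi_{\{i\}}(x)=x_i$ and $\phi_{\{i,j\}}(x)=\min(x_i,x_j)-\tfrac12(x_i+x_j)=-\tfrac12|x_i-x_j|$. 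The paper gives no justification for general $k$.
\end{itemize}

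Neither the theorem nor your proof covers the passage from $\hat{\mathfrak{R}}_S$ to a risk gap. That step needs a contraction argument for a Lipschitz surrogate loss plus a confidence term, and the paper's stated gap bound leaves both implicit.
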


\begin{proof}[Application to Shapley Regression]
In our context, the ``features'' are the $D_k$ basis functions.
1. The inputs are bounded: $|\phi_A(x)| \le 1$ for all $A$, so $X_\infty = 1$.
2. The $\ell_1$-regularized Shapley Regression constrains $\sum |I(A)| \le B$.
3. Substituting these into the standard Rademacher bound yields:
\[
R(h) - \hat{R}_N(h) \le 2 B \sqrt{\frac{2 \ln(2 D_k)}{N}}
\]
This derivation confirms Eq.~(5), showing the logarithmic dependence on the dimension $D_k$.
\end{proof}




\newpage

\section{Computational Complexity Analysis}\label{app:complexity}

While Shapley Regression significantly outperforms the linear baseline in predictive power, it introduces additional computational cost due to the expansion of the feature space. In a $k$-additive model ($k=2$), the number of features increases from $n$ to $n + \binom{n}{2}$, reflecting the explicit modeling of pairwise interactions.

All experiments were conducted on a standard commodity laptop (CPU only, no GPU acceleration). Table~\ref{tab:complexity} reports the computational resources required by the models evaluated in this study on the Pima Indians Diabetes dataset, including training time, inference time, model size, and estimated floating-point operations (FLOPs), averaged over the outer folds of a 5-fold nested cross-validation.

\begin{table}[h]
\centering
\caption{Computational efficiency and model complexity across methods on the Pima Indians Diabetes dataset. Reported values correspond to the mean training time, inference time, model size, and estimated floating-point operations (FLOPs), averaged over the outer folds of a 5-fold nested cross-validation.}
\label{tab:complexity}
\resizebox{\columnwidth}{!}{%
\begin{tabular}{lcccc}
\toprule
\textbf{Model} & \textbf{Training Time (s)} & \textbf{Inference Time (s)} & \textbf{Model Size (MB)} & \textbf{FLOPs} \\
\midrule
Logistic Regression & 0.0021 & 0.0005 & 0.0010 & 2457.6\\
Random Forest & 0.2393 & 0.0343 & 0.4192 &181240.2 \\
XGBoost & 0.0861 & 0.0025 & 0.3758 & 159640.0\\
NN & 4.271 & 0.0010 & 0.0076 & 63086.4 \\
\textbf{SR (k=2)} & 0.0020 & 0.0005 & 0.0020 & 9830.4\\
\bottomrule
\end{tabular}%
}
\end{table}

\paragraph{The Cost of Interpretability}

As shown in Table~\ref{tab:complexity}, Shapley Regression with $k=2$ incurs only a modest computational overhead compared to standard logistic regression, despite the expanded feature space. In practice, its training and inference times remain comparable to linear models and substantially lower than those of tree-based and neural network approaches, while the model size increases only marginally.

In the context of rare disease diagnosis, this highlights the model's:
\begin{enumerate}
  \item \textbf{Feasibility:} The absolute computational cost remains low. A model that trains and runs efficiently on a standard CPU without specialized hardware is well suited for clinical research and deployment, in contrast to deep learning approaches that often require GPU acceleration and substantially higher computational resources.
  \item \textbf{Value Proposition:} The additional computation directly "purchases" two critical properties: \textbf{convexity} (guaranteeing a unique, stable solution) and \textbf{exact game-theoretic interpretability}. Unlike black-box neural networks where the cost goes into opaque hidden layers, here the cost goes into explicitly calculating every pairwise symptom interaction.
\end{enumerate}

Thus, SR occupies a ``sweet spot'': it is powerful enough to capture non-linearities (unlike LR), yet simple enough to run locally without specialized hardware (unlike DL), making it an ideal candidate for decentralized deployment in hospitals.

\newpage

\section{Technical Validation Details}\label{app:technical_validation}

To justify the use of the 2-additive SR model for the medical use case, we performed extensive benchmarking on synthetic and public datasets. This section details the implementation and protocols for those experiments.

\subsection{Implementation}
Our modular Python framework consists of:
\begin{itemize}
  \item \textbf{Representation Modules:} Handling conversions between Möbius, Shapley, and game-based parameterizations.
  \item \textbf{Logistic Integration:} A custom class wrapping the representation modules to compute forward passes, gradients, and cross-entropy loss within a standard \texttt{sklearn}-compatible interface.
  \item \textbf{Codebase:} The full implementation is available at the github repository\footnote{\url{https://github.com/tomasbrogueira/shapley_regression}}.
\end{itemize}

\subsection{Benchmark Datasets}
We validated the method on eight public binary classification datasets from the UCI repository and other sources, covering diverse domains (medical, financial, agricultural).
\begin{itemize}
  \item \textbf{Synthetic Data:} To test interaction recovery, we generated a dataset (\texttt{pure\_pairwise}) where the ground truth log-odds are determined strictly by multiplicative interactions ($x_i \cdot x_j$).
  \item \textbf{Public Datasets:} Statistics for the eight real-world datasets (including Diabetes, COVID-19, and Mammographic screening) are provided in Table \ref{tab:benchmark_datasets}. All continuous attributes were rescaled to $[0,1]$.
\end{itemize}

\begin{table}[h]
\caption{Descriptive statistics and sources of public, private, and synthetic datasets. The column ``Benchmarking'' indicates datasets included in the benchmark experiments.}
\centering
\label{tab:benchmark_datasets}
\resizebox{\columnwidth}{!}{%
\begin{tabular}{llccclc}
\toprule
\textbf{Dataset} & \textbf{Domain} & \textbf{Samples} & \textbf{Features} & \textbf{Class Balance} & \textbf{Source / Link} & \textbf{Benchmarking}\\
\midrule
Necker Hospital APDS Dataset & Medicine & 222 & 2,563 & 193 / 29 & Private & No \\
Dados Covid SBPO & Medicine & 64,174 & 9 & 63,033 / 1,141 & Private & Yes \\
Banknote Authentication & Finance & 1,372 & 4 & 762 / 610 & \href{https://archive.ics.uci.edu/dataset/267/banknote+authentication}{UCI Repository} & Yes \\
Blood Transfusion & Medicine & 748 & 4 & 570 / 178 & \href{https://www.kaggle.com/datasets/whenamancodes/blood-transfusion-dataset}{Kaggle} & Yes\\
Mammographic Mass & Medicine & 831 & 4 & 428 / 403 & \href{https://archive.ics.uci.edu/dataset/161/mammographic+mass}{UCI Repository} & Yes \\
Raisin & Agriculture & 900 & 7 & 450 / 450 & \href{https://archive.ics.uci.edu/dataset/850/raisin}{UCI Repository} & Yes \\
Rice (Cammeo/Osmancik) & Agriculture & 3,810 & 7 & 1,630 / 2,180 & \href{https://www.kaggle.com/datasets/muratkokludataset/rice-dataset-commeo-and-osmancik}{Kaggle} & Yes \\
Pima Indians Diabetes & Medicine & 768 & 8 & 500 / 268 & \href{https://www.kaggle.com/datasets/uciml/pima-indians-diabetes-database}{Kaggle} & Yes \\
Skin Segmentation & Medicine & 245,056 & 4 & 50,858 / 194,198 & \href{https://archive.ics.uci.edu/dataset/229/skin+segmentation}{UCI Repository} & Yes \\
Pure Pairwise & Synthetic & 1,000 & 15 & 500 / 500 & Generated & Yes \\
\bottomrule
\end{tabular}%
}
\end{table}

\subsection{Benchmark Protocol}
For the technical validation (results in Supplementary Material in Appendix F):
\begin{itemize}
  \item We performed a sweep of the additivity parameter $k=1,\dots,n$.
  \item We compared three regularization regimes: None, $\ell_1$ (Lasso), and $\ell_2$ (Ridge).
  \item \textbf{Noise Robustness:} Evaluated by adding zero-mean Gaussian noise ($\sigma \in \{0.1, 0.2, 0.3\}$) to test inputs.
  \item \textbf{Stability:} Computed via bootstrap accuracy on 50 resamples.
\end{itemize}
These experiments confirmed that the $\ell_2$-regularized 2-additive model offers the optimal trade-off for high-stakes domains, motivating its selection for the APDS study.

\newpage
\section{Overall Performance Summary} \label{app:performance_summary}

This section summarizes the performance of the SR model under different regularization settings (L1, L2, None), as illustrated in Table~\ref{tab:summary_shapley_l1}, Table~\ref{tab:summary_shapley_l2}, and Table~\ref{tab:summary_shapley_none}, respectively. For each dataset, we report the k-value that achieves the highest accuracy, the k-value that provides the best stability under bootstrapping, and the k-value that shows the most robustness to noise in the input data. Accuracy values are presented as mean $\pm$ standard deviation.

\begin{table*}[h]
\centering
\caption{Overall Performance Summary for Shapley None}
\label{tab:summary_shapley_none}
\resizebox{\textwidth}{!}{%
\begin{tabular}{lrlrlrr}
\toprule
Dataset & Best K (Acc) & Accuracy ($\pm$ Std) & Best K (Robust) & Robustness Accuracy ($\pm$ Std) & Best K (Stab) & Bootstrap Stability (Std) \\
\midrule
banknotes & 1 & $1.0000 \pm 0.0000$ & 1 & $1.0000 \pm 0.0000$ & 1 & 0.0000 \\
dados\_covid\_sbpo\_atual & 5 & $0.6951 \pm 0.0036$ & 3 & $0.6872 \pm 0.0007$ & 4 & 0.0025 \\
diabetes & 5 & $0.7933 \pm 0.0255$ & 4 & $0.7513 \pm 0.0243$ & 2 & 0.0231 \\
mammographic & 2 & $0.8171 \pm 0.0270$ & 2 & $0.8140 \pm 0.0067$ & 1 & 0.0242 \\
pure\_pairwise\_interaction & 2 & $0.7800 \pm 0.0251$ & 4 & $0.7440 \pm 0.0142$ & 10 & 0.0231 \\
raisin & 1 & $0.8593 \pm 0.0216$ & 1 & $0.7896 \pm 0.0116$ & 5 & 0.0195 \\
rice & 2 & $0.9266 \pm 0.0071$ & 7 & $0.8720 \pm 0.0076$ & 2 & 0.0071 \\
skin & 1 & $1.0000 \pm 0.0000$ & 1 & $0.9996 \pm 0.0001$ & 1 & 0.0000 \\
transfusion & 2 & $0.7281 \pm 0.0325$ & 3 & $0.7140 \pm 0.0107$ & 1 & 0.0230 \\
\bottomrule
\end{tabular}%
}
\end{table*}

\begin{table*}[h]
\centering
\caption{Overall Performance Summary for Shapley L1}
\label{tab:summary_shapley_l1}
\resizebox{\textwidth}{!}{%
\begin{tabular}{lrlrlrr}
\toprule
Dataset & Best K (Acc) & Accuracy ($\Pm$ Std) & Best K (Robust) & Robustness Accuracy ($\Pm$ Std) & Best K (Stab) & Bootstrap Stability (Std) \\
\midrule
banknotes & 1 & $1.0000 \pm 0.0000$ & 4 & $1.0000 \pm 0.0000$ & 1 & 0.0000 \\
dados\_covid\_sbpo\_atual & 6 & $0.6946 \pm 0.0034$ & 8 & $0.6904 \pm 0.0008$ & 1 & 0.0028 \\
diabetes & 2 & $0.7733 \pm 0.0311$ & 6 & $0.7653 \pm 0.0153$ & 4 & 0.0230 \\
mammographic & 2 & $0.8132 \pm 0.0261$ & 3 & $0.8062 \pm 0.0103$ & 4 & 0.0224 \\
pure\_pairwise\_interaction & 2 & $0.7400 \pm 0.0222$ & 7 & $0.7313 \pm 0.0120$ & 2 & 0.0222 \\
raisin & 2 & $0.8667 \pm 0.0207$ & 5 & $0.8467 \pm 0.0127$ & 5 & 0.0197 \\
rice & 1 & $0.9228 \pm 0.0091$ & 4 & $0.9072 \pm 0.0046$ & 6 & 0.0068 \\
skin & 1 & $1.0000 \pm 0.0000$ & 4 & $0.9996 \pm 0.0000$ & 1 & 0.0000 \\
transfusion & 1 & $0.7135 \pm 0.0323$ & 2 & $0.7070 \pm 0.0113$ & 3 & 0.0267 \\
\bottomrule
\end{tabular}%
}
\end{table*}

\begin{table*}[h]
\centering
\caption{Overall Performance Summary for Shapley L2}
\label{tab:summary_shapley_l2}
\resizebox{\textwidth}{!}{%
\begin{tabular}{lrlrlrr}
\toprule
Dataset & Best K (Acc) & Accuracy ($\pm$ Std) & Best K (Robust) & Robustness Accuracy ($\pm$ Std) & Best K (Stab) & Bootstrap Stability (Std) \\
\midrule
banknotes & 1 & $1.0000 \pm 0.0000$ & 1 & $1.0000 \pm 0.0000$ & 1 & 0.0000 \\
dados\_covid\_sbpo\_atual & 7 & $0.6951 \pm 0.0027$ & 6 & $0.6899 \pm 0.0008$ & 4 & 0.0025 \\
diabetes & 4 & $0.7667 \pm 0.0297$ & 5 & $0.7667 \pm 0.0180$ & 5 & 0.0241 \\
mammographic & 2 & $0.8093 \pm 0.0269$ & 2 & $0.8086 \pm 0.0057$ & 1 & 0.0264 \\
pure\_pairwise\_interaction & 4 & $0.7267 \pm 0.0273$ & 4 & $0.7100 \pm 0.0135$ & 11 & 0.0224 \\
raisin & 2 & $0.8704 \pm 0.0239$ & 7 & $0.8519 \pm 0.0110$ & 1 & 0.0211 \\
rice & 1 & $0.9251 \pm 0.0089$ & 5 & $0.9165 \pm 0.0039$ & 2 & 0.0075 \\
skin & 1 & $1.0000 \pm 0.0000$ & 1 & $0.9998 \pm 0.0000$ & 1 & 0.0000 \\
transfusion & 2 & $0.7105 \pm 0.0318$ & 3 & $0.7058 \pm 0.0129$ & 1 & 0.0230 \\
\bottomrule
\end{tabular}%
}
\end{table*}

\newpage

\section{Noise Robustness Analysis}

We also carried out an extensive robustness study taking into account different configurations. In particular, we provide a detailed breakdown of model accuracy under such as different regularizations, values of $k$ and levels noise in the github repository\footnote{\label{note:github}\url{https://github.com/tomasbrogueira/shapley_regression}}.

\section{APDS Study} \label{app:apds_study}

In this appendix we provide further experimental details and findings on the APDS cohort.
\subsection{Data Source and Cohort Selection} \label{app:data_apds_study}

Our cohort is composed of:
\begin{itemize}
  \item \textbf{APDS Group ($n=29$):} Included patients with a confirmed diagnosis of APDS, determined via specific keywords: “APDS,” “APDS1,” “APDS2,” “PIK3CD,” “PIK3R1,” “PI3K,” and “PI3KCD.” Diagnosis dates were determined by the date of the first document mentioning these keywords. All clinical notes dated strictly after the determined diagnosis date were removed from the dataset. 

  \item \textbf{Control Group ($n=193$):} Included control patients (without APDS) selected from the same department. Controls were selected to have clinical characteristics similar to the APDS cohort, including comparable numbers of clinical events and pathways.
\end{itemize}

For this study, we conducted extensive experiments comparing classical Logistic Regression (LR), tree-based ensembles (Random Forest and XGBoost), and a feed-forward neural network (NN).
\paragraph{Protocol.}
All models were trained using \textbf{nested 5-fold cross-validation}. The inner loop (3 folds) was used for hyperparameter tuning (e.g., regularization strength $\lambda$ for LR, tree depth for XGBoost), while the outer loop assessed performance. Due to the class imbalance (29 \textit{vs.} 193), we optimized for F1-score and report balanced accuracy, ROC-AUC, and Precision-Recall AUC.

Models were evaluated using different regularization schemes (L1, L2, or none) and strategies to handle class imbalance (class weighting alone or combined with undersampling at a 0.33 ratio). For each experiment, we measured training time, inference time, model size, and estimated floating-point operations (FLOPs). All APDS experiments were executed on a Linux server with an AMD Ryzen 7 5800X CPU (8 cores / 16 threads, 3.8~GHz) and 62~GB of RAM. Detailed scripts, hyperparameters, and results are available in the github repository\footnotemark[\getrefnumber{note:github}].

\subsection{Main Findings} \label{app:exp_apds_study}

In Table~\ref{tab:clinical_features_combined} 
we report the top 20 positively contributing features computed by their mean Shapley values derived from the Shapley Regression model. An expert-driven clinical review of these individual phenotypes indicated that 18 of the 20 features are directly consistent with known APDS manifestations and overlap with clinical manifestations reported for APDS in reference resources such as Orphanet \cite{pavan_clinical_2017}. Two features, \textit{tetanus} and \textit{rhonchi}, were not immediately associated with APDS classification. Upon further clinical interpretation, the presence of \textit{tetanus} was exclusively attributed to the documentation of tetanus vaccination status rather than to the observation of an active disease, explaining its appearance among extracted phenotypes. Similarly, \textit{rhonchi} was interpreted as a respiratory sign likely related to recurrent pulmonary infections, airway inflammation, or lymphoid hyperplasia—conditions frequently observed in APDS patients. Several individuals in the cohort exhibited such respiratory symptoms, supporting the relevance of this feature in the model's predictions.
Overall, this analysis demonstrates that the majority of highly ranked features align with established clinical characteristics of APDS, while the remaining features can be plausibly explained through contextual clinical interpretation, reinforcing the interpretability of the model.



\begin{table*}[t]
\centering
\caption{Top 20 positively contributing phenotypes for APDS classification identified by the Shapley Regression model, ranked by mean Shapley coefficients across 5-fold cross-validation. The column \emph{Clinical expert opinion} indicates whether a phenotype is considered clinically associated with APDS according to expert review, while \emph{Reported in Orphanet} indicates whether the phenotype is listed in Orphanet for APDS. A checkmark (\checkmark) denotes presence or association, and -- denotes absence or no documented association.}
\label{tab:clinical_features_combined}
\resizebox{\textwidth}{!}{%
\begin{tabular}{lcc}
\toprule
\textbf{Phenotype} & \textbf{Clinical expert opinion} & \textbf{Reported in Orphanet} \\
\midrule
Immunodeficiency & \checkmark & \checkmark \\
Splenomegaly & \checkmark & \checkmark \\
Lymphopenia & \checkmark & -- \\
Lymphadenopathy & \checkmark & \checkmark \\
CD4 T-cell lymphopenia & \checkmark & -- \\
Combined immunodeficiency & \checkmark & \checkmark \\
Bronchitis & \checkmark & -- \\
Cervical lymphadenopathy & \checkmark & \checkmark \\
Bronchiectasis & \checkmark & \checkmark \\
Hyperplasia & \checkmark & \checkmark \\
Otorrhea & \checkmark & \checkmark \\
Superinfection & \checkmark & -- \\
Hypogammaglobulinemia & \checkmark & \checkmark \\
Pneumonia & \checkmark & \checkmark \\
Infection & \checkmark & \checkmark \\
Rhinorrhea & \checkmark & -- \\
Bronchial infection & \checkmark & \checkmark \\
Lymphoma & \checkmark & \checkmark \\
Tetanus & -- & -- \\
Rhonchi & -- & -- \\
\bottomrule
\end{tabular}
}
\end{table*}

\end{document}